\documentclass{ieeetj}

\usepackage{cite}
\usepackage{amsmath,amssymb,amsfonts}
\usepackage{amsthm}

\newtheorem{proposition}{Proposition}
\newtheorem{corollary}{Corollary}

\usepackage{graphicx}
\usepackage{textcomp}
\usepackage{xcolor}
\usepackage{booktabs}
\usepackage{multirow}
\usepackage{algorithm}
\usepackage{algorithmic}
\usepackage{hyperref}
\hypersetup{
    colorlinks=true,
    citecolor=blue,
    linkcolor=blue,
    urlcolor=blue,
    pdftitle={PreGS: A Parameter-Transfer-Based Multi-Expert Graph Neural Network for Node Classification},
    pdfauthor={Zhicong Cai, Yinglong Zhang, Xiaoying Hong, Xuewen Xia, Xing Xu}
}

\definecolor{BestRed}{RGB}{220,0,0}
\definecolor{SecondBlue}{RGB}{0,70,180}
\definecolor{ThirdGreen}{RGB}{0,120,0}

\providecommand{\first}[1]{}
\providecommand{\second}[1]{}
\providecommand{\third}[1]{}

\renewcommand{\first}[1]{\textcolor{BestRed}{\textbf{#1\ensuremath{_{\mathrm{(1)}}}}}}
\renewcommand{\second}[1]{\textcolor{SecondBlue}{\textbf{#1\ensuremath{_{\mathrm{(2)}}}}}}
\renewcommand{\third}[1]{\textcolor{ThirdGreen}{\textbf{#1\ensuremath{_{\mathrm{(3)}}}}}}

\def\BibTeX{{\rm B\kern-.05em{\sc i\kern-.025em b}\kern-.08em
    T\kern-.1667em\lower.7ex\hbox{E}\kern-.125emX}}

\AtBeginDocument{
\definecolor{tmlcncolor}{cmyk}{0.93,0.59,0.15,0.02}
\definecolor{NavyBlue}{RGB}{0,86,125}
}

\def\OJlogo{}
\def\seclogo{}

\begin{document}
%\receiveddate{XX Month, XXXX}
%\reviseddate{XX Month, XXXX}
%\accepteddate{XX Month, XXXX}
%\publisheddate{XX Month, XXXX}
%\currentdate{XX Month, XXXX}
%\doiinfo{XXXX.2022.1234567}

\markboth
{PreGS: A Parameter-Transfer-Based Multi-Expert Graph Neural Network for Node Classification}
{Zhicong Cai et al.}

\title{PreGS: A Parameter-Transfer-Based Multi-Expert Graph Neural Network for Node Classification}

\author{Zhicong Cai, Yinglong Zhang, Xiaoying Hong, Xuewen Xia, and Xing Xu}
\affil{College of Physics and Information Engineering, Minnan Normal University, Zhangzhou 363000, China}
\corresp{Corresponding author: Yinglong Zhang (e-mail: zhang\_yinglong@126.com).}

\begin{abstract}
Graph neural networks have achieved strong performance in node classification by aggregating information from graph neighborhoods. However, a single aggregation mechanism may be insufficient to capture diverse structural patterns across graph datasets. Moreover, independently training multiple structural branches can introduce substantial overhead without necessarily producing stable node representations. To address these issues, this paper proposes PreGS, a parameter-transfer-based multi-expert graph neural network framework. PreGS first pretrains a multi-head graph attention network (GAT) and transfers the linear transformation weights of its first-layer attention heads to multiple GraphSAGE experts. The transferred experts are frozen and used as complementary structural branches. The fused raw node features, GAT head representations, and GraphSAGE expert representations are fed into a multilayer perceptron (MLP), whose output is further fused with the pretrained GAT logits. Based on PreGS, we further develop PreGSv2, which introduces source-level weighting and a structural gating mechanism for adaptive multi-source feature integration. Experiments on eight public graph datasets show that PreGS and PreGSv2 achieve competitive performance against representative graph neural network baselines. 
Ablation, parameter-transfer, sensitivity, aggregator, visualization, and training-time analyses further validate the effectiveness and stability of the proposed framework. The code and datasets are available at \url{https://github.com/LH-Czc/PreGS}.

\end{abstract}

\begin{IEEEkeywords}
Graph neural networks, multi-expert fusion, node classification,
parameter transfer.
\end{IEEEkeywords}

%\IEEEspecialpapernotice{(Invited Paper)}

\maketitle

\section{INTRODUCTION}
\IEEEPARstart{N}{ode} classification is a fundamental task in graph machine learning and has been widely applied to citation analysis, social networks, recommendation systems, and knowledge discovery. Graph neural networks (GNNs) have become a dominant paradigm for this task by learning node representations through neighborhood aggregation and message passing \cite{gilmer2017mpnn}. Representative models such as the graph convolutional network
(GCN)~\cite{kipf2017gcn}, GraphSAGE~\cite{hamilton2017graphsage},
and the graph attention network (GAT)~\cite{velickovic2018gat}
capture graph structural information through normalized propagation,
fixed neighborhood aggregation, and attention-based neighborhood
weighting, respectively.

Recent studies have further improved GNNs from multiple perspectives, including attention mechanisms, simplified propagation, expressive aggregation functions, multi-scale representation learning, and multi-channel fusion \cite{brody2022gatv2,e2025tangnn,wu2019sgc,xu2019gin,xu2018jknet,corso2020pna,wang2020amgcn}. Meanwhile, graph Transformers and other global modeling methods have attracted increasing attention because of their ability to capture long-range dependencies beyond local message passing \cite{deng2024polynormer,fu2024vcrgraphormer,zhou2026vecformer}. However, recent re-evaluations suggest that classical GNNs remain highly competitive when properly tuned and structurally enhanced. In several node-level and graph-level settings, they can even match or outperform more complex graph Transformer models while retaining better computational efficiency \cite{luo2024classic,luo2025classic}. These observations indicate that the representational potential of classical message-passing architectures has not yet been fully exploited.

Despite these advances, existing methods still face two limitations. First, most GNNs rely on a fixed neighborhood modeling bias within a single propagation branch. For instance, the mean aggregator in GraphSAGE is simple and efficient, but it cannot explicitly distinguish the relative importance of neighboring nodes. GAT introduces attention-based weights to model neighbor importance, but its propagation process is still constrained by a single attention-driven aggregation form. As a result, a single GNN branch may not fully capture the complementary advantages of different neighborhood aggregation mechanisms.

Second, although ensemble learning and mixture-of-experts (MoE) models can increase model capacity, general MoE methods usually depend on expert selection and sparse gating mechanisms \cite{shazeer2017moe,fedus2022switch}. Graph-based MoE methods have also been studied for node classification, decoupled message passing, weak-and-strong expert collaboration, and out-of-distribution graph learning \cite{shi2024moenc,chen2025decoupledmoe,zeng2024weakstrong,sun2026diverse}. Nevertheless, these methods often require independently trained experts or additional routing modules, which increases computational cost and makes it difficult to ensure explicit parameter correspondence and representation-space consistency among experts.

Related to this direction, graph-neural-network-to-multilayer-perceptron
(GNN-to-MLP) distillation methods attempt to transfer structural knowledge
from GNNs to lightweight MLPs or expert models, thereby reducing training
or inference costs
\cite{zhang2022glnn,tian2024dgkd,rumiantsev2024gkdmoe,eskandari2026infgrand}. Recent adaptive hierarchical distillation methods further align GNN and MLP representations across different layers and dimensions to reduce information loss caused by representation mismatch \cite{zhang2025ahkd}. However, these approaches mainly focus on teacher-student knowledge transfer and make limited use of the internal parameter structures of trained GNNs to construct graph experts with aligned representations.

Motivated by these observations, we propose PreGS, a parameter-transfer-based multi-expert graph neural network framework. The key idea is to revisit GNNs from a unified neighborhood aggregation perspective and interpret the difference between GAT and GraphSAGE as different strategies for constructing neighborhood aggregation weights. Since the two models share compatible linear feature transformation structures, the parameters of different attention heads in the first layer of a pretrained multi-head GAT can be transferred to multiple GraphSAGE experts. In this way, PreGS constructs a structurally homologous multi-expert system with explicit parameter correspondence and aligned representation spaces.

During training, the pretrained GAT and the transferred GraphSAGE experts are frozen, while only the fusion module and the MLP classifier are optimized. This design decouples graph structural feature extraction from task-specific adaptation, thereby reducing optimization complexity and improving training stability. The base model, PreGS, adopts a grouped weighted fusion strategy to separately integrate GAT multi-head features and GraphSAGE expert features. The enhanced model, PreGSv2, further introduces source-level weighting and a structural gating mechanism to achieve finer-grained adaptive modeling of multi-source features.

The main contributions of this paper are summarized as follows:
\begin{itemize}
    \item We present a unified neighborhood aggregation perspective and analyze the transferable relationship between GAT and GraphSAGE in terms of their linear feature transformation structures.
    \item We propose a GraphSAGE multi-expert construction mechanism based on pretrained GAT parameter transfer, forming an expert system with structural homology and aligned representation spaces.
    \item We introduce a decoupled training paradigm with frozen experts, where only the fusion module and the MLP classifier are trained, thereby reducing optimization complexity and improving training stability.
    \item We design two fusion models, PreGS and PreGSv2, and conduct a
comprehensive evaluation through node classification, ablation,
parameter-transfer, sensitivity, aggregator, visualization, and
training-time experiments.
\end{itemize}
\section{RELATED WORK}

Existing GNNs usually differ in how neighborhood information is weighted, aggregated, and fused. 
GCN~\cite{kipf2017gcn} adopts normalized graph convolution, GraphSAGE~\cite{hamilton2017graphsage} uses predefined neighborhood aggregators, and GAT~\cite{velickovic2018gat} assigns adaptive attention weights to neighboring nodes. 
Later variants such as GATv2~\cite{brody2022gatv2},
TANGNN~\cite{e2025tangnn}, SGC~\cite{wu2019sgc},
GIN~\cite{xu2019gin}, JK-Net~\cite{xu2018jknet}, and
PNA~\cite{corso2020pna} further improve attention expressiveness,
scalable attention-based neighborhood selection, propagation efficiency,
aggregation capacity, cross-layer representation fusion, and
multi-aggregator modeling. 
However, these methods are usually designed as independent architectures, and the transferable relationship between the internal parameters of different GNN branches has received less attention. 
This motivates our investigation of whether the first-layer attention-head parameters of a pretrained GAT can be reused to construct structurally aligned GraphSAGE experts.

Beyond conventional local message passing, recent studies have explored global interaction and multi-branch modeling to enhance graph representation learning. 
Graph Transformers introduce global node interactions to capture long-range dependencies~\cite{deng2024polynormer,fu2024vcrgraphormer,zhou2026vecformer}, while multi-channel, multi-scale, multi-aggregator, and graph mixture-of-experts methods improve representation learning through complementary branches, aggregation functions, or expert routing mechanisms~\cite{corso2020pna,wang2020amgcn,shi2024moenc,chen2025decoupledmoe,zeng2024weakstrong,sun2026diverse}. 
Although these methods demonstrate the value of combining multiple structural views, they mainly rely on additional architectures or routing modules. 
In contrast, PreGS constructs aligned GraphSAGE experts by explicitly transferring parameters from pretrained GAT heads, which provides a direct parameter-level connection between different GNN branches.

Another related line of work is decoupled graph learning and GNN-to-MLP distillation, which separate structural representation learning from downstream prediction through structural constraints, knowledge transfer, or hybrid expert modeling~\cite{zeng2024weakstrong,zhang2022glnn,tian2024dgkd,rumiantsev2024gkdmoe,zhang2025ahkd}. 
These methods mainly transfer knowledge through predictions, intermediate representations, or structural regularization. 
PreGS follows the spirit of decoupled learning, but focuses on a different transfer target: it reuses the internal feature-transformation parameters of pretrained GAT heads to construct structurally aligned GraphSAGE experts, and then performs lightweight task adaptation through trainable fusion modules.

\section{PRELIMINARIES}

To describe the proposed framework, this section introduces the graph notation and briefly reviews the basic components used in this work, including GAT, GraphSAGE, MLP, and the node classification loss.

\subsection{Notation}

Let $G=(V,E)$ denote an undirected graph, where $V=\{v_1,v_2,\ldots,v_N\}$ is the node set, $N=|V|$ is the number of nodes, and $E\subseteq V\times V$ is the edge set. The node feature matrix is denoted by $\mathbf{X}\in\mathbb{R}^{N\times d}$, where $d$ is the feature dimension. For node $v_i$, its original feature vector is denoted by $\mathbf{x}_i\in\mathbb{R}^{1\times d}$. The label of node $v_i$ is denoted by $y_i$, and the label set of all nodes is denoted by $\mathbf{Y}=\{y_i\}_{i=1}^{N}$, where $y_i\in\{1,2,\ldots,C\}$ and $C$ is the number of classes.

For node $v_i$, its one-hop neighborhood is denoted by 
$\mathcal{N}(i)=\{v_j \mid (v_i,v_j)\in E\}$. 
When the aggregation process includes the target node itself, the self-loop neighborhood is written as 
$\widetilde{\mathcal{N}}(i)=\mathcal{N}(i)\cup\{v_i\}$.

In this paper, $\sigma(\cdot)$ denotes a nonlinear activation function, $\|$ denotes feature concatenation, $\operatorname{softmax}(\cdot)$ denotes a normalization function, and $\operatorname{AGG}(\cdot)$ denotes a replaceable neighborhood aggregation operator. Common aggregation operators include mean, sum, and max, corresponding to neighborhood averaging, neighborhood summation, and neighborhood maximum aggregation, respectively.

For consistency, the hidden representation of node $v_i$ is denoted by $\mathbf{h}_i$, and a linear transformation parameter is denoted by $\mathbf{W}$. If a model contains $K$ attention heads or $K$ expert branches, the parameter and output representation of the $k$th branch are denoted by $\mathbf{W}^{(k)}$ and $\mathbf{h}_i^{(k)}$, respectively, where $k=1,2,\ldots,K$.

Unless otherwise specified, all node feature vectors in this paper are represented as row vectors. Accordingly, linear transformations, attention computation, and feature concatenation are written in row-vector form.

\subsection{Graph Attention Network}

Graph Attention Network (GAT) assigns adaptive weights to neighboring nodes through an attention mechanism, thereby modeling their different contributions to the target node representation \cite{velickovic2018gat}. For node $v_i$ and its neighbor $v_j\in\widetilde{\mathcal{N}}(i)$, let $\mathbf{h}_i$ and $\mathbf{h}_j$ denote their input representations in the current GAT layer. In the first GAT layer, $\mathbf{h}_i=\mathbf{x}_i$ and $\mathbf{h}_j=\mathbf{x}_j$.

With attention coefficients $\alpha_{ij}$ normalized over $\widetilde{\mathcal{N}}(i)$, the GAT update can be written as
\begin{equation}
\mathbf{h}'_i
=
\sigma
\left(
\sum_{v_j\in\widetilde{\mathcal{N}}(i)}
\alpha_{ij}\mathbf{h}_j\mathbf{W}
\right).
\label{eq:gat_update}
\end{equation}
For a multi-head GAT with $K$ attention heads, the output of the $k$th head is
\begin{equation}
\mathbf{h}^{(k)}_i
=
\sigma
\left(
\sum_{v_j\in\widetilde{\mathcal{N}}(i)}
\alpha^{(k)}_{ij}\mathbf{h}_j\mathbf{W}^{(k)}
\right),
\label{eq:gat_multi_head}
\end{equation}
where $k=1,2,\ldots,K$.

In this paper, GAT serves as the pretrained source model. The independent first-layer head parameters $\{\mathbf{W}^{(k)}\}_{k=1}^{K}$ provide the parameter sources for constructing the subsequent GraphSAGE expert branches.

\subsection{GraphSAGE}

GraphSAGE is a neighborhood-aggregation-based graph neural network for learning node representations in an inductive manner \cite{hamilton2017graphsage}. Unlike GAT, which learns adaptive attention weights for neighboring nodes, GraphSAGE usually adopts a predefined aggregation function to summarize neighborhood features.

When the mean aggregator is used and the target node itself is included in the aggregation set, the GraphSAGE update can be written as
\begin{equation}
\mathbf{h}'_i
=
\sigma
\left(
\frac{1}{|\widetilde{\mathcal{N}}(i)|}
\sum_{v_j\in\widetilde{\mathcal{N}}(i)}
\mathbf{h}_j\mathbf{W}
\right).
\label{eq:graphsage_mean}
\end{equation}
This formulation shows that the mean aggregation in GraphSAGE can be interpreted as a neighborhood weighted-sum process with fixed uniform weights. It therefore shares a common aggregation form with the attention-based weighted aggregation in GAT, providing a structural basis for transferring parameters from GAT attention heads to GraphSAGE expert branches.

\subsection{Multilayer Perceptron}

The Multilayer Perceptron (MLP) is used as the classifier after feature fusion. 
Unlike GNN layers, it does not explicitly use graph topology. 
Given the fused representation $\mathbf{f}_i$ of node $v_i$, the two-layer MLP classifier is defined as
\begin{equation}
\mathbf{z}^{\mathrm{MLP}}_i
=
\operatorname{MLP}(\mathbf{f}_i)
=
\sigma(\mathbf{f}_i\mathbf{W}_1+\mathbf{b}_1)\mathbf{W}_2+\mathbf{b}_2,
\label{eq:mlp}
\end{equation}
where $\mathbf{z}^{\mathrm{MLP}}_i$ denotes the MLP logits of node $v_i$. 
In the proposed framework, the MLP performs task-specific prediction based on the fused representations.

\subsection{Node Classification Loss}

For node classification, this paper uses the cross-entropy loss as the training objective. Let $y_i$ denote the ground-truth label of node $v_i$, and let $\mathbf{z}_i$ denote the final logits produced by the model. For a training node set $\mathcal{V}_{\mathrm{tr}}$, the cross-entropy loss is defined as
\begin{equation}
\begin{aligned}
\mathcal{L}_{\mathrm{CE}}
=&
-\frac{1}{|\mathcal{V}_{\mathrm{tr}}|}
\sum_{v_i\in\mathcal{V}_{\mathrm{tr}}}
\sum_{c=1}^{C}
\mathbf{1}(y_i=c) \\
&\times
\log
\left(
\frac{\exp(z_{ic})}
{\sum_{r=1}^{C}\exp(z_{ir})}
\right),
\end{aligned}
\label{eq:cross_entropy}
\end{equation}
where $C$ is the number of classes, $\mathbf{1}(\cdot)$ is the indicator function, and $z_{ic}$ denotes the final logit of node $v_i$ for class $c$. The predicted label is obtained by $\hat{y}_i=\arg\max_c z_{ic}$.

In PreGS and PreGSv2, this loss is used to optimize the fusion module and the MLP classifier, while the pretrained GAT and the transferred GraphSAGE experts are kept fixed.

\section{PROPOSED METHOD}

This section presents the proposed PreGS family framework. We first analyze the structural relationship between GAT and GraphSAGE from a unified neighborhood aggregation perspective. Then, we describe the three-stage procedure of PreGS, including GAT pretraining, GraphSAGE expert construction, and decoupled fusion training. Finally, we introduce the base model PreGS, the enhanced model PreGSv2, and the corresponding algorithmic procedure.

\subsection{Theoretical Basis}
\label{sec:theoretical_basis}

To explain the rationale behind the proposed framework, we start from a unified view of neighborhood aggregation. Although existing graph neural networks adopt different mechanisms for neighborhood modeling, their common objective is to update the representation of a target node by aggregating information from its neighbors. For example, GAT learns adaptive neighborhood weights through attention, while GraphSAGE summarizes neighborhood features using a predefined aggregation function. Despite their architectural differences, both models can be interpreted as neighborhood feature aggregation processes.

\subsubsection{Unified Neighborhood Aggregation Perspective}

To characterize the relationship between GAT and GraphSAGE, we formulate their aggregation processes from a unified neighborhood aggregation perspective.

\begin{proposition}
\label{prop:unified_aggregation}
For graph neural networks that adopt linear feature transformation and neighborhood information aggregation, the node representation update process of weighted-sum-based aggregators can be written in the following unified form:
\begin{equation}
\mathbf{h}'_i
=
\sigma
\left(
\sum_{v_j\in\widetilde{\mathcal{N}}(i)}
\beta_{ij}\mathbf{h}_j\mathbf{W}
\right),
\label{eq:unified_aggregation}
\end{equation}
where $\mathbf{W}$ is the feature transformation matrix, and $\beta_{ij}$ denotes the aggregation weight assigned by node $v_i$ to node $v_j$. Different aggregation mechanisms mainly differ in the construction of $\beta_{ij}$.
\end{proposition}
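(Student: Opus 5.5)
The claim is essentially definitional, so the proof will be by exhibiting, for each aggregator covered by the statement, an explicit choice of $\beta_{ij}$ that turns its update rule into \eqref{eq:unified_aggregation}. First I will pin down the scope: a ``weighted-sum-based aggregator'' is one whose pre-activation output at node $v_i$ is a linear combination $\sum_{v_j\in\widetilde{\mathcal{N}}(i)} c_{ij}\,\mathbf{h}_j\mathbf{W}$, with coefficients $c_{ij}$ that may depend on the graph and on the features but not on the linear map applied to $\mathbf{h}_j$. The key structural fact is linearity of right-multiplication by $\mathbf{W}$:
\begin{equation*}
\sum_{v_j} c_{ij}\,\mathbf{h}_j\mathbf{W}=\Bigl(\sum_{v_j} c_{ij}\,\mathbf{h}_j\Bigr)\mathbf{W}.
\end{equation*}
Because of this, it does not matter whether an implementation transforms the features before aggregating or aggregates before transforming. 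Setting $\beta_{ij}:=c_{ij}$ then yields \eqref{eq:unified_aggregation} directly.

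Next I will verify the instances the paper relies on.
\begin{itemize}
\item For GAT, comparing \eqref{eq:gat_update} with \eqref{eq:unified_aggregation} gives $\beta_{ij}=\alpha_{ij}$. Here $\alpha_{ij}$ is the softmax over $\widetilde{\mathcal{N}}(i)$ of the attention scores, so the weights are nonnegative, sum to one, and are feature-dependent. For the multi-head version \eqref{eq:gat_multi_head}, each head is the same form with $\beta^{(k)}_{ij}=\alpha^{(k)}_{ij}$ and $\mathbf{W}=\mathbf{W}^{(k)}$.
\item For GraphSAGE-mean, \eqref{eq:graphsage_mean} gives $\beta_{ij}=1/|\widetilde{\mathcal{N}}(i)|$: weights that are uniform, sum to one, and are fixed by the topology.
\item I will add sum aggregation ($\beta_{ij}=1$) and GCN ($\beta_{ij}=1/\sqrt{\tilde d_i\tilde d_j}$) to illustrate the closing sentence of the statement, that different mechanisms differ only in how $\beta_{ij}$ is constructed.
\end{itemize}

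The main obstacle is delimiting the scope honestly, rather than any calculation. Max aggregation, for example, is not a weighted sum of the transformed features with coefficients independent of $\mathbf{W}$. The coordinatewise max does not factor through a single scalar weight per neighbor, since different coordinates may select different neighbors. So max falls outside \eqref{eq:unified_aggregation}, and I will state explicitly that the proposition covers only aggregators that are linear in the neighbor messages.

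A second subtlety is GraphSAGE variants that keep a separate self-weight, $\mathbf{h}_i\mathbf{W}_{\mathrm{self}}+\mathrm{AGG}(\cdot)\mathbf{W}_{\mathrm{neigh}}$. These fit the form only when the two weight matrices are tied, which is exactly the self-loop mean formulation \eqref{eq:graphsage_mean} the paper adopts. I will note this restriction as part of the hypothesis rather than try to prove it away.
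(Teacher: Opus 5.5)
Your proposal is correct and takes the same route as the paper, whose proof simply exhibits $\beta_{ij}=\alpha_{ij}$ for GAT and $\beta_{ij}=1/|\widetilde{\mathcal{N}}(i)|$ and $\beta_{ij}=1$ for the GraphSAGE mean and sum aggregators; your linearity remark, the GCN example, and the explicit exclusion of max aggregation and untied self-weights only make precise a scope that the paper leaves implicit. One small fix: drop the clause requiring the coefficients $c_{ij}$ to be independent of $\mathbf{W}$, because standard GAT computes $\alpha_{ij}$ from $\mathbf{h}_i\mathbf{W}$ and $\mathbf{h}_j\mathbf{W}$ (so your own GAT instance violates that clause), and the identity $\sum_j c_{ij}\mathbf{h}_j\mathbf{W}=\bigl(\sum_j c_{ij}\mathbf{h}_j\bigr)\mathbf{W}$ holds for arbitrary scalar weights anyway.
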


\begin{IEEEproof}
For GAT, by setting $\beta_{ij}=\alpha_{ij}$, the GAT update rule in \eqref{eq:gat_update} can be written in the unified form of \eqref{eq:unified_aggregation}. In this case, $\alpha_{ij}$ is normalized over the neighborhood of node $v_i$.

For GraphSAGE with the mean and sum aggregators, by setting $\beta_{ij}=1/|\widetilde{\mathcal{N}}(i)|$ and $\beta_{ij}=1$, respectively, the corresponding aggregation rules can also be written in the unified form of \eqref{eq:unified_aggregation}.

Therefore, the above aggregation rules can all be expressed under the unified weighted-sum aggregation form.
\end{IEEEproof}

This unified neighborhood aggregation view indicates that GAT, GraphSAGE-mean, and GraphSAGE-sum share compatible linear feature transformation structures and differ mainly in their neighborhood summarization strategies. Therefore, the parameters learned by a pretrained GAT have the potential to be transferred to GraphSAGE expert structures, and the multi-head attention architecture of GAT provides a natural basis for constructing multiple experts. In the proposed framework, different GraphSAGE experts can further adopt different aggregation operators to increase representation diversity, as described in the experimental configuration.

\subsection{PreGS Family Framework}

The central problem addressed in this paper is how to use the naturally parallel multi-head structure of GAT to construct multiple GraphSAGE expert branches with explicit parameter correspondence and aligned representation spaces. To this end, we propose the PreGS family, a parameter-transfer-based multi-expert graph neural network framework, including the base model PreGS and the enhanced model PreGSv2. Instead of training independent experts or using complex routing mechanisms commonly adopted in MoE-based graph models~\cite{shi2024moenc,chen2025decoupledmoe,zeng2024weakstrong}, PreGS transfers the parameters of the first-layer attention heads of a pretrained GAT to multiple GraphSAGE experts, making the experts structurally homologous and explicitly parameter-related.

In the subsequent training stage, both the pretrained GAT and the transferred GraphSAGE experts are frozen. Only the fusion module and the MLP classifier are optimized. This decoupled design preserves the representation ability of the pretrained model while reducing the optimization complexity of later training.

\subsection{Overall Procedure}

The proposed method consists of three stages: GAT pretraining, multi-expert construction with parameter transfer, and decoupled fusion training.

\noindent\textbf{1) GAT pretraining.}
In the first stage, a multi-head GAT is pretrained on the target graph to obtain stable attention-head parameters and graph structural representations. Let $\mathbf{h}^{(k)}_i$ denote the output representation of node $v_i$ from the $k$th attention head in the first GAT layer, where $k=1,2,\ldots,K$. Meanwhile, the final output logits of the pretrained GAT are retained and denoted by $\mathbf{z}^{\mathrm{GAT}}_i$. These logits contain high-level structural information obtained after multi-layer attention propagation and are used as an important supplement in the final prediction fusion. After pretraining, the GAT parameters are fixed and no longer updated during the subsequent fusion training stage.

\noindent\textbf{2) Multi-expert construction and parameter transfer.}
In the second stage, the linear transformation parameters of the $k$th attention head in the first GAT layer are transferred to the $k$th GraphSAGE expert:
\begin{equation}
\mathbf{W}^{(k)}_{\mathrm{GS}}
\leftarrow
\mathbf{W}^{(k)}_{\mathrm{GAT}},
\label{eq:parameter_transfer}
\end{equation}
where $k=1,2,\ldots,K$.
This operation constructs $K$ single-layer GraphSAGE expert branches. Each expert corresponds to one GAT attention head, so the experts are not randomly initialized or independently trained from scratch. Instead, they are derived from the multi-head structure of the pretrained GAT. The output representation of node $v_i$ produced by the $k$th single-layer GraphSAGE expert is denoted by $\mathbf{e}^{(k)}_i$. After expert construction, all GraphSAGE expert parameters are frozen.

\noindent\textbf{3) Decoupled fusion training.}
In the third stage, the pretrained GAT and the transferred GraphSAGE experts act as fixed feature extractors and do not participate in parameter updates. The model only trains the fusion module and the MLP classifier. Based on this design, this paper develops two fusion strategies. PreGS performs grouped weighted fusion over GAT multi-head features and GraphSAGE expert features, while PreGSv2 further introduces source-level weighting and structural gating for more adaptive multi-source feature modeling.

Because the parameters of the pretrained GAT and the GraphSAGE experts are frozen, the training process only needs to optimize the branch-fusion weights, logit-fusion weights, and MLP classifier parameters, together with the source-level weighting and gating parameters in PreGSv2. This design transforms the highly coupled end-to-end optimization problem in conventional GNNs into a lightweight task-adaptation problem, thereby reducing training complexity and improving training stability.

Figure~\ref{fig:detailed_framework} further presents the detailed architecture of the PreGS family. It shows how the pretrained GAT branch, the transferred GraphSAGE expert branch, and the raw feature branch are integrated before the final fusion-and-prediction stage. The detailed formulations of PreGS and PreGSv2 are given in the following subsections.

\begin{figure*}[!t]
\centering
\includegraphics[width=\textwidth]{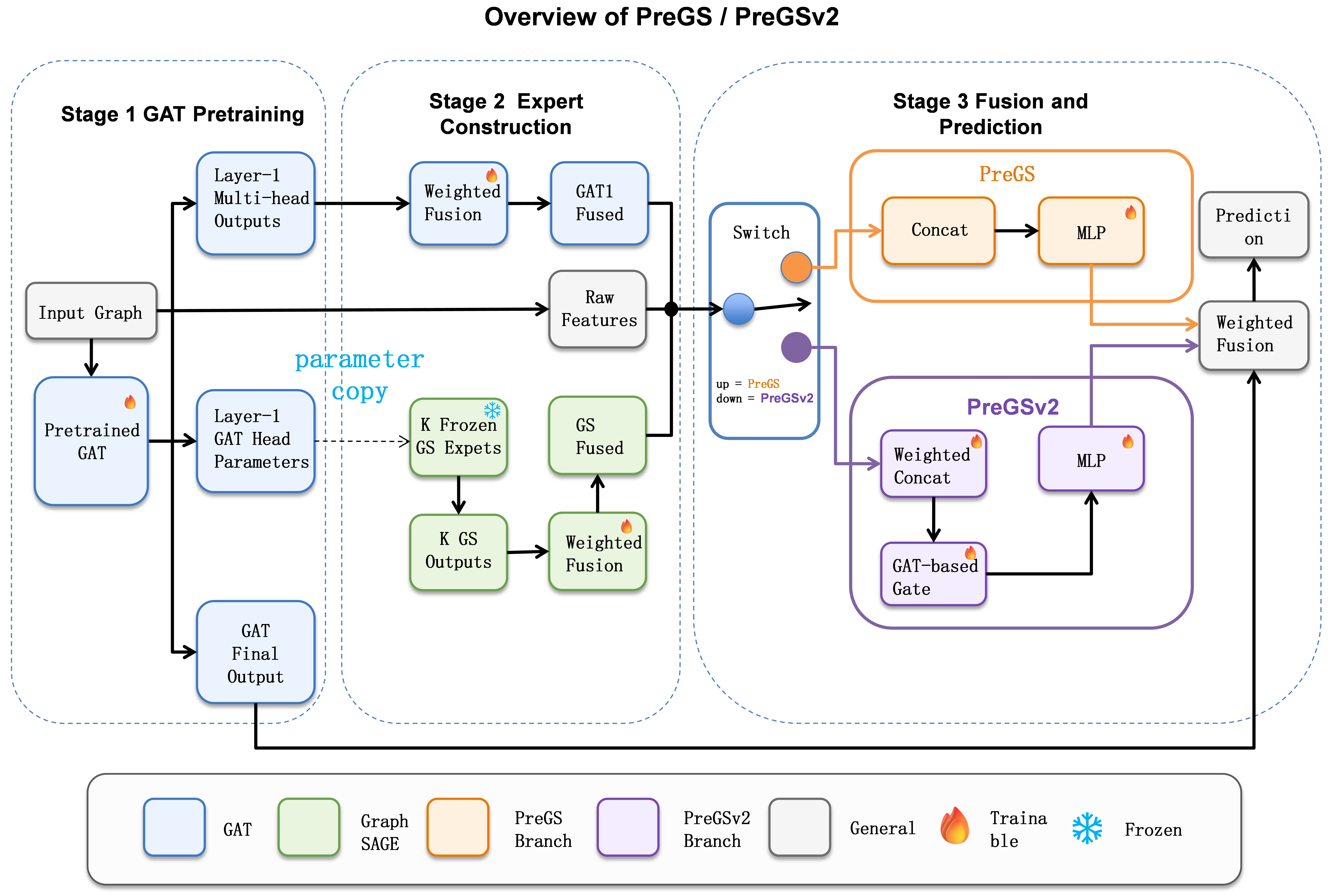}
\caption{Detailed architecture of the proposed PreGS and PreGSv2 framework. The figure illustrates GAT pretraining, GraphSAGE expert construction through parameter transfer, and the fusion-and-prediction stage used by the two model variants.}
\label{fig:detailed_framework}
\end{figure*}

\subsection{PreGS: Grouped Weighted Fusion}

For each node $v_i$, PreGS uses three types of feature sources: the raw node feature $\mathbf{x}_i$, the first-layer GAT head representations $\{\mathbf{h}^{(k)}_i\}_{k=1}^{K}$, and the transferred GraphSAGE expert representations $\{\mathbf{e}^{(k)}_i\}_{k=1}^{K}$.

To model the importance of different branches, learnable weights are introduced for the GAT heads and the GraphSAGE experts, respectively. For the GAT branch, the normalized weight of the $k$th head is defined as
\begin{equation}
\omega^{\mathrm{GAT}}_k
=
\frac{\exp(\widetilde{\omega}^{\mathrm{GAT}}_k)}
{\sum_{t=1}^{K}\exp(\widetilde{\omega}^{\mathrm{GAT}}_t)},
\label{eq:pregs_gat_weight}
\end{equation}
where $\{\widetilde{\omega}^{\mathrm{GAT}}_k\}_{k=1}^{K}$ are learnable parameters. For the GraphSAGE expert branch, the normalized expert weight is defined as
\begin{equation}
\omega^{\mathrm{GS}}_k
=
\frac{\exp(\widetilde{\omega}^{\mathrm{GS}}_k)}
{\sum_{t=1}^{K}\exp(\widetilde{\omega}^{\mathrm{GS}}_t)},
\label{eq:pregs_gs_weight}
\end{equation}
where $\{\widetilde{\omega}^{\mathrm{GS}}_k\}_{k=1}^{K}$ are also learnable parameters.

The two groups of features are then fused separately by weighted summation:
\begin{equation}
\mathbf{f}^{\mathrm{GAT}}_i
=
\sum_{k=1}^{K}
\omega^{\mathrm{GAT}}_k \mathbf{h}^{(k)}_i,
\label{eq:pregs_gat_fusion}
\end{equation}
\begin{equation}
\mathbf{f}^{\mathrm{GS}}_i
=
\sum_{k=1}^{K}
\omega^{\mathrm{GS}}_k \mathbf{e}^{(k)}_i.
\label{eq:pregs_gs_fusion}
\end{equation}

After that, the raw node feature and the two fused representations are concatenated to obtain the final fusion representation:
\begin{equation}
\mathbf{f}_i
=
\mathbf{x}_i
\|
\mathbf{f}^{\mathrm{GAT}}_i
\|
\mathbf{f}^{\mathrm{GS}}_i.
\label{eq:pregs_concat}
\end{equation}
The fused representation is then fed into an MLP classifier:
\begin{equation}
\mathbf{z}^{\mathrm{MLP}}_i
=
\operatorname{MLP}(\mathbf{f}_i).
\label{eq:pregs_mlp_output}
\end{equation}

Since the final output of the pretrained GAT contains high-level structural information, PreGS further combines the MLP logits with the final GAT logits through learnable logit-level fusion:
\begin{equation}
\mathbf{z}_i
=
\frac{\exp(\lambda_1)}
{\exp(\lambda_1)+\exp(\lambda_2)}
\mathbf{z}^{\mathrm{MLP}}_i
+
\frac{\exp(\lambda_2)}
{\exp(\lambda_1)+\exp(\lambda_2)}
\mathbf{z}^{\mathrm{GAT}}_i,
\label{eq:pregs_logit_fusion}
\end{equation}
where $\lambda_1$ and $\lambda_2$ are learnable fusion parameters, and $\mathbf{z}_i$ denotes the final logits of node $v_i$.

During training, the pretrained GAT and the transferred GraphSAGE experts are frozen. Only the fusion weights and the MLP classifier parameters are optimized. The training objective follows the cross-entropy loss defined in \eqref{eq:cross_entropy}.

\subsection{PreGSv2: Source-Weighted Gated Fusion}

Although PreGS can effectively combine the structural information from the pretrained GAT and the GraphSAGE experts, its fusion weights are shared across all nodes after training. Therefore, it cannot dynamically adjust the importance of different feature sources according to the structural characteristics of individual nodes. To improve the adaptive modeling ability of the framework, we further propose PreGSv2, which introduces source-level weighting and a structural gating mechanism on the basis of PreGS.

\subsubsection{Source-Level Weighted Fusion}

PreGSv2 introduces learnable source-level weights for the raw feature, the fused GAT representation, and the fused GraphSAGE representation. The normalized source weight is defined as
\begin{equation}
\rho_s
=
\frac{\exp(\eta_s)}
{\sum_{t=1}^{3}\exp(\eta_t)},
\label{eq:source_weight}
\end{equation}
where $\{\eta_s\}_{s=1}^{3}$ are learnable parameters. The source-weighted representation is then constructed by weighted concatenation:
\begin{equation}
\mathbf{f}^{\mathrm{src}}_i
=
\rho_1\mathbf{x}_i
\|
\rho_2\mathbf{f}^{\mathrm{GAT}}_i
\|
\rho_3\mathbf{f}^{\mathrm{GS}}_i,
\label{eq:source_weighted_concat}
\end{equation}
where $\mathbf{f}^{\mathrm{GAT}}_i$ and $\mathbf{f}^{\mathrm{GS}}_i$ are obtained from \eqref{eq:pregs_gat_fusion} and \eqref{eq:pregs_gs_fusion}, respectively.

\subsubsection{Structural Gating Mechanism}

To further enhance node-level adaptive modeling, PreGSv2 introduces a structural gating mechanism. The gate vector is generated from the fused GAT representation:
\begin{equation}
\mathbf{g}_i
=
\sigma
\left(
\mathbf{f}^{\mathrm{GAT}}_i\mathbf{W}_g+\mathbf{b}_g
\right),
\label{eq:gate_vector}
\end{equation}
where $\mathbf{W}_g$ and $\mathbf{b}_g$ are learnable parameters. The gate vector is then used to modulate the source-weighted representation dimension by dimension:
\begin{equation}
\mathbf{f}^{\mathrm{gate}}_i
=
\mathbf{g}_i
\odot
\mathbf{f}^{\mathrm{src}}_i,
\label{eq:gated_feature}
\end{equation}
where $\odot$ denotes the Hadamard product.

The gated representation is fed into the MLP classifier:
\begin{equation}
\mathbf{z}^{\mathrm{MLP}}_i
=
\operatorname{MLP}
\left(
\mathbf{f}^{\mathrm{gate}}_i
\right).
\label{eq:pregsv2_mlp_output}
\end{equation}
As in PreGS, the MLP logits are further combined with the final GAT logits:
\begin{equation}
\mathbf{z}_i
=
\frac{\exp(\lambda_1)}
{\exp(\lambda_1)+\exp(\lambda_2)}
\mathbf{z}^{\mathrm{MLP}}_i
+
\frac{\exp(\lambda_2)}
{\exp(\lambda_1)+\exp(\lambda_2)}
\mathbf{z}^{\mathrm{GAT}}_i.
\label{eq:pregsv2_logit_fusion}
\end{equation}

Compared with PreGS, PreGSv2 keeps the parameter-transfer mechanism and the multi-expert structure unchanged, but introduces source-level weighting and structural gating to dynamically adjust different feature sources. This improves the ability of the model to represent complex graph structures. PreGSv2 is trained with the same cross-entropy loss defined in \eqref{eq:cross_entropy}.

\subsection{Further Theoretical Analysis}

The previous subsections construct a parameter-transfer-based multi-expert graph neural network framework from the unified neighborhood aggregation view. This subsection further analyzes the framework from three aspects: the feasibility of parameter transfer, representation-space consistency after transfer, and the multi-expert interpretation of the transferred GraphSAGE branches.

\begin{proposition}
\label{prop:parameter_transfer}
Assume that a pretrained GAT attention head and a GraphSAGE expert have the same input and output feature dimensions. Then, the feature transformation parameter $\mathbf{W}$ learned by the GAT attention head can be transferred to the GraphSAGE expert as its initialization parameter, regardless of the specific neighborhood aggregation operator used by the expert.
\end{proposition}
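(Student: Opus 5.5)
The argument is a dimension and well-definedness check built on Proposition~\ref{prop:unified_aggregation}. Write the $k$th GAT head as $\mathbf{h}^{(k)}_i=\sigma\bigl(\sum_{v_j\in\widetilde{\mathcal{N}}(i)}\alpha^{(k)}_{ij}\mathbf{h}_j\mathbf{W}^{(k)}\bigr)$ with $\mathbf{W}^{(k)}\in\mathbb{R}^{d\times d'}$. Write the GraphSAGE expert as $\mathbf{e}_i=\sigma\bigl(\operatorname{AGG}(\{\mathbf{h}_j\mathbf{W}_{\mathrm{GS}}: v_j\in\widetilde{\mathcal{N}}(i)\})\bigr)$ with $\mathbf{W}_{\mathrm{GS}}\in\mathbb{R}^{d\times d'}$, which matches the head's dimensions by assumption. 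The key structural observation is that in both models the parameter $\mathbf{W}$ acts only through the per-node map $\mathbf{h}_j\mapsto\mathbf{h}_j\mathbf{W}$. This map is applied before any neighborhood operation, so its domain and codomain are fixed by $d$ and $d'$ alone.

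First, I will show that the assignment \eqref{eq:parameter_transfer} is well defined. Since $\mathbf{W}^{(k)}_{\mathrm{GAT}}$ and $\mathbf{W}_{\mathrm{GS}}$ lie in the same space $\mathbb{R}^{d\times d'}$, copying entries yields an admissible parameter. For every node, the transformed messages $\mathbf{h}_j\mathbf{W}^{(k)}_{\mathrm{GAT}}\in\mathbb{R}^{1\times d'}$ are exactly the vectors the expert's aggregator expects as inputs. Second, I will split by aggregator.
\begin{itemize}
\item For weighted-sum aggregators (mean, sum), Proposition~\ref{prop:unified_aggregation} gives the expert as \eqref{eq:unified_aggregation} with $\beta_{ij}=1/|\widetilde{\mathcal{N}}(i)|$ or $\beta_{ij}=1$. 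It is therefore the GAT head with $\alpha^{(k)}_{ij}$ replaced by $\beta_{ij}$ and the same $\mathbf{W}$. The transfer changes only the weighting scheme, not the parameter's role.
\item For non-weighted-sum aggregators such as max, I cannot invoke the unified form. Instead I will argue directly: $\operatorname{AGG}$ is any permutation-invariant map from multisets of $\mathbb{R}^{1\times d'}$ vectors into a space of fixed dimension. Because $\mathbf{W}$ enters only through the linear map applied beforehand, the expert's forward computation is defined for any $\mathbf{W}\in\mathbb{R}^{d\times d'}$, in particular the transferred one.
\end{itemize}

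Third, I will note that the GAT attention vectors $\mathbf{a}^{(k)}$ have no counterpart in the expert and are simply discarded. Any extra GraphSAGE parameters, such as a separate self-weight or bias, can take their standard initialization. Neither affects whether $\mathbf{W}$ can be used as the initialization parameter.

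The main obstacle is the phrase ``regardless of the specific neighborhood aggregation operator''. Aggregators that apply learnable transformations \emph{before} $\mathbf{W}$, such as pooling with an inner MLP, or that change the output dimension, such as concatenating self and neighbor features, would violate the dimension-matching premise. I will handle this by reading the hypothesis of equal input and output dimensions as requiring that the aggregator receive the linearly transformed messages $\mathbf{h}_j\mathbf{W}$ and return a $d'$-dimensional vector. Under this reading the claim is exactly the well-definedness of the forward pass established above. It asserts transferability as initialization, not functional equivalence with the GAT head, which holds only when $\beta_{ij}=\alpha^{(k)}_{ij}$.
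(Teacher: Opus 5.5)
Your proposal is correct and follows essentially the same route as the paper: the paper's proof is precisely the dimensional-compatibility argument that $\mathbf{W}$ is a fixed $d\times d'$ feature map, while the aggregator only summarizes the transformed features and does not affect that compatibility. Your case split through Proposition~\ref{prop:unified_aggregation} and your explicit reading of the hypothesis (the aggregator consumes $\mathbf{h}_j\mathbf{W}$ and returns a $d'$-dimensional vector) make precise what the paper's short argument leaves implicit.
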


\begin{IEEEproof}
For a fixed input and output feature space, the matrix $\mathbf{W}$ represents the feature mapping between node representations. The neighborhood aggregation operator determines how transformed neighborhood features are summarized, but it does not change the dimensional compatibility of $\mathbf{W}$. Therefore, when the input and output dimensions are consistent, the feature transformation parameter learned by a GAT attention head can be transferred to a GraphSAGE expert.
\end{IEEEproof}

Proposition~\ref{prop:parameter_transfer} shows that the feasibility of parameter transfer comes from the shared feature transformation structure of the two models, rather than from identical neighborhood aggregation mechanisms. Therefore, the pretrained GAT can provide effective initialization for GraphSAGE experts and avoid constructing each expert from random initialization.

\begin{corollary}
\label{cor:representation_consistency}
Under the parameter-transfer condition, the GAT attention heads and the GraphSAGE experts share the same linear transformation parameters. Therefore, their output representations lie in aligned feature transformation spaces and can be directly fused at the feature level.
\end{corollary}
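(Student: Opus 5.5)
The natural route is to make precise what ``aligned feature transformation space'' means, and then show that both branches produce outputs in it. Fix a head index $k$. After the transfer step \eqref{eq:parameter_transfer}, the $k$th GAT head and the $k$th GraphSAGE expert share the same matrix $\mathbf{W}^{(k)}\in\mathbb{R}^{d\times d'}$. I will define the aligned space for index $k$ as the row space $\mathcal{S}^{(k)}=\{\mathbf{u}\mathbf{W}^{(k)}:\mathbf{u}\in\mathbb{R}^{1\times d}\}\subseteq\mathbb{R}^{1\times d'}$, with its coordinate system inherited from the columns of $\mathbf{W}^{(k)}$. The claim then splits into two parts. First, the pre-activation outputs of both branches lie in $\mathcal{S}^{(k)}$ and are expressed in the same coordinates. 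Second, the post-activation outputs share a common dimension and a coordinate-wise correspondence, so the weighted sums \eqref{eq:pregs_gat_fusion}--\eqref{eq:pregs_gs_fusion} and the concatenation \eqref{eq:pregs_concat} are well defined and meaningful.

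\textbf{Weighted-sum aggregators.} I invoke Proposition~\ref{prop:unified_aggregation}. Both branches have the form $\sigma\bigl(\sum_{j}\beta_{ij}\mathbf{x}_j\mathbf{W}^{(k)}\bigr)$, with $\beta_{ij}=\alpha^{(k)}_{ij}$ for GAT and $\beta_{ij}=1/|\widetilde{\mathcal{N}}(i)|$ (or $\beta_{ij}=1$) for GraphSAGE. By linearity the pre-activations are $\bigl(\sum_j\beta_{ij}\mathbf{x}_j\bigr)\mathbf{W}^{(k)}\in\mathcal{S}^{(k)}$. Thus the two branches differ only in which input-space vector, $\sum_j\beta_{ij}\mathbf{x}_j$, is fed into the common map $\mathbf{W}^{(k)}$. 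Because the output coordinate $c$ in both branches is the inner product with the same column $\mathbf{W}^{(k)}_{:,c}$, coordinates correspond one to one. The attention weights are convex, and so are the mean weights, so for those two aggregators both pre-activations lie in the image under $\mathbf{W}^{(k)}$ of the convex hull of $\{\mathbf{x}_j\}_{j\in\widetilde{\mathcal{N}}(i)}$. This makes the outputs close whenever the attention is near uniform. Since $\sigma$ acts elementwise and is identical in both branches, the coordinate correspondence survives the activation.

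\textbf{Max aggregator.} The expert may instead use a non-weighted-sum aggregator such as max, which Proposition~\ref{prop:parameter_transfer} permits. Here I would argue coordinate-wise. The $c$th pre-activation coordinate equals $\max_j (\mathbf{x}_j\mathbf{W}^{(k)})_c$, which is still a function of the same projections $\mathbf{x}_j\mathbf{W}^{(k)}_{:,c}$. The coordinate correspondence therefore holds even though the vector may leave $\mathcal{S}^{(k)}$.

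\textbf{Main obstacle.} The hardest step is making ``aligned'' precise enough to say something non-vacuous. Equal dimension alone would be trivial. The substantive content is the shared coordinate basis given by the columns of $\mathbf{W}^{(k)}$, and the fact that the two outputs share the subspace $\mathcal{S}^{(k)}$ in the weighted-sum case. I would state this explicitly. Across different heads $k\neq k'$ the matrices differ, so alignment holds per index $k$. Feature-level fusion is nonetheless valid, because \eqref{eq:pregs_gat_fusion} and \eqref{eq:pregs_gs_fusion} combine within each family and all vectors have the common dimension $d'$. That dimension match comes directly from Proposition~\ref{prop:parameter_transfer}, which completes the corollary.
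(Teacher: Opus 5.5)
Your argument is correct and is essentially the paper's own proof made precise. The paper likewise argues that the shared matrix $\mathbf{W}^{(k)}_{\mathrm{GS}}=\mathbf{W}^{(k)}_{\mathrm{GAT}}$ fixes a common transformed output space that the differing aggregation weights do not change, and your linearity, column-wise coordinate, and max-aggregator steps spell this out. One caveat: when $\mathbf{W}^{(k)}$ has full column rank $d'\le d$ (generic in every experiment here, since $d\ge 64$ and $d'=8$), $\mathcal{S}^{(k)}$ is all of $\mathbb{R}^{1\times d'}$. In that case the shared-subspace claim, and the remark that a max output may leave $\mathcal{S}^{(k)}$, add nothing beyond the dimension match, and the genuine content is the shared column-wise coordinate system.
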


\begin{IEEEproof}
Let the transformation matrix of the $k$th GAT attention head be $\mathbf{W}^{(k)}_{\mathrm{GAT}}$. According to the parameter-transfer operation, the corresponding GraphSAGE expert is initialized with the same transformation matrix, namely $\mathbf{W}^{(k)}_{\mathrm{GS}}=\mathbf{W}^{(k)}_{\mathrm{GAT}}$. Thus, both branches map the input features into the same transformed feature dimension. Although their neighborhood aggregation weights are different, their feature mapping spaces remain aligned. Therefore, the GAT head representation $\mathbf{h}^{(k)}_i$ and the GraphSAGE expert representation $\mathbf{e}^{(k)}_i$ have representation-space consistency.
\end{IEEEproof}

Representation-space consistency ensures that $\mathbf{f}^{\mathrm{GAT}}_i$ and $\mathbf{f}^{\mathrm{GS}}_i$ can be weighted and concatenated directly without introducing an additional projection layer. This provides the theoretical basis for the fusion strategy in PreGS.

\begin{proposition}
\label{prop:multi_expert_construction}
Suppose the first layer of a pretrained GAT contains $K$ attention heads, whose feature transformation matrices are denoted by $\{\mathbf{W}^{(k)}_{\mathrm{GAT}}\}_{k=1}^{K}$. If the parameter of each attention head is transferred to one GraphSAGE expert, the resulting GraphSAGE expert set can be regarded as a structurally homologous reconstruction and extension of the GAT multi-head representation in the GraphSAGE architecture.
\end{proposition}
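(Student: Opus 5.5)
The plan is to make the informal phrase ``structurally homologous reconstruction and extension'' precise and then check it directly from the update rules. I will call a set of $K$ branches \emph{structurally homologous} to the GAT first layer if there is a bijection $k\mapsto k$ between branches and heads such that the $k$th branch uses the transformation matrix $\mathbf{W}^{(k)}_{\mathrm{GAT}}$. The branch output should then be an instance of the unified form \eqref{eq:unified_aggregation} with that matrix. \emph{Reconstruction} will mean that each head output $\mathbf{h}^{(k)}_i$ and expert output $\mathbf{e}^{(k)}_i$ are obtained from the same transformed neighbor features $\{\mathbf{h}_j\mathbf{W}^{(k)}_{\mathrm{GAT}}\}$ and differ only in the weights $\beta_{ij}$. \emph{Extension} will mean that the pair (head, expert) yields two distinct summaries of the same transformed neighborhood, which the fusion modules can use jointly.

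First, invoke the transfer operation \eqref{eq:parameter_transfer} and Proposition~\ref{prop:parameter_transfer}. These show that each assignment $\mathbf{W}^{(k)}_{\mathrm{GS}}\leftarrow\mathbf{W}^{(k)}_{\mathrm{GAT}}$ is well defined, giving exactly $K$ experts indexed by the heads. Second, write the $k$th head through \eqref{eq:gat_multi_head} as $\mathbf{h}^{(k)}_i=\sigma(\sum_j \alpha^{(k)}_{ij}\mathbf{h}_j\mathbf{W}^{(k)}_{\mathrm{GAT}})$. Write the $k$th expert through Proposition~\ref{prop:unified_aggregation} as $\mathbf{e}^{(k)}_i=\sigma(\sum_j \beta^{(k)}_{ij}\mathbf{h}_j\mathbf{W}^{(k)}_{\mathrm{GAT}})$, where $\beta^{(k)}_{ij}$ is determined by the chosen aggregator, for example $1/|\widetilde{\mathcal{N}}(i)|$ for mean or $1$ for sum. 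This displays both as the same map $\mathbf{h}_j\mapsto\mathbf{h}_j\mathbf{W}^{(k)}_{\mathrm{GAT}}$ followed by different neighborhood weightings, which establishes homology and reconstruction. Corollary~\ref{cor:representation_consistency} then gives that each pair lies in the same aligned output space. Third, for the extension claim, observe that the expert output coincides with the head output exactly when $\alpha^{(k)}_{ij}=\beta^{(k)}_{ij}$ on the neighborhood, which is the case when attention is uniform. Otherwise the expert contributes a different summary. The collection $\{\mathbf{h}^{(k)}_i\}\cup\{\mathbf{e}^{(k)}_i\}$ therefore contains the original multi-head representation as a subset and augments it with $K$ further aggregations. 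When different experts use different operators (mean, sum, max), each head's transformed features are read out under several aggregation biases. Max is not a weighted sum, but it still acts on the same transformed features $\mathbf{h}_j\mathbf{W}^{(k)}_{\mathrm{GAT}}$, so homology survives even though the unified-form description does not.

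The main obstacle is that the statement is interpretive: ``can be regarded as'' has no fixed mathematical content. The proof stands or falls on the definitions chosen in the first step. I would therefore state those definitions explicitly before the argument and keep the claim to what they support. I would not assert any quantitative guarantee, such as better accuracy or closeness of $\mathbf{e}^{(k)}_i$ to $\mathbf{h}^{(k)}_i$, since nothing in the setup bounds $|\alpha^{(k)}_{ij}-\beta^{(k)}_{ij}|$. If a quantitative remark were wanted, the natural one is a Lipschitz bound. Assuming $\sigma$ is $L$-Lipschitz, $\|\mathbf{h}^{(k)}_i-\mathbf{e}^{(k)}_i\|\le L\,\|\mathbf{W}^{(k)}_{\mathrm{GAT}}\|\sum_j|\alpha^{(k)}_{ij}-\beta^{(k)}_{ij}|\,\|\mathbf{h}_j\|$, which makes ``reconstruction'' exact under uniform attention and approximate when attention is close to uniform.
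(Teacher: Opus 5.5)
Your argument is correct and is essentially the paper's. The paper likewise grounds homology in each expert inheriting $\mathbf{W}^{(k)}_{\mathrm{GAT}}$ from its own head and grounds the extension claim in the experts using different aggregation operators; your explicit definitions, the unified-form comparison of $\mathbf{h}^{(k)}_i$ and $\mathbf{e}^{(k)}_i$, and the separate treatment of max make that short verbal proof precise.

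One side remark needs tightening. The condition $\alpha^{(k)}_{ij}=\beta^{(k)}_{ij}$ is sufficient but not necessary for $\mathbf{e}^{(k)}_i=\mathbf{h}^{(k)}_i$; for instance, identical transformed neighbor features give equality for a mean expert even when the weights differ. Also, uniform attention gives exact reconstruction, both in your text and in your Lipschitz remark, only for mean experts: a sum expert with $\beta_{ij}=1$ differs from uniform attention by the factor $|\widetilde{\mathcal{N}}(i)|$ inside $\sigma$.
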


\begin{IEEEproof}
Each GraphSAGE expert inherits the feature transformation parameter learned by a different GAT attention head. Therefore, different experts preserve the feature projection ability of different attention heads. Meanwhile, because the experts may use different aggregation operators or neighborhood weighting patterns, they produce diverse expert representations. Thus, the transferred GraphSAGE experts form a parameter-related and structurally corresponding expert set derived from the pretrained GAT multi-head structure.
\end{IEEEproof}

Proposition~\ref{prop:multi_expert_construction} indicates that multiple GraphSAGE experts can be interpreted as a reconstruction and extension of the pretrained GAT multi-head knowledge under the GraphSAGE architecture. Therefore, the proposed framework does not simply combine independent models; instead, it builds a multi-expert system with explicit parameter origins and structural correspondence.

In summary, the proposed framework is supported by parameter-transfer feasibility, representation-space consistency, and multi-expert construction. First, the unified neighborhood aggregation view explains why GAT parameters can be transferred to GraphSAGE structures. Second, shared initialization parameters keep different experts in aligned representation spaces. Finally, the structural correspondence between GAT multi-head attention and GraphSAGE experts provides an interpretation for multi-expert construction. These analyses jointly provide the theoretical basis for the proposed multi-expert graph neural network framework.

\subsection{Algorithmic Procedure}

Algorithm~\ref{alg:pregs_family} summarizes the training procedure of PreGS and PreGSv2.

\begin{algorithm}[!t]
\caption{Training Procedure of PreGS and PreGSv2}
\label{alg:pregs_family}
\footnotesize
\begin{algorithmic}[1]
\REQUIRE Graph $G=(V,E)$, features $\mathbf{X}$, labels $\mathbf{Y}$, heads/experts $K$, variant $m$
\ENSURE Node predictions $\{\hat{y}_i\}_{v_i\in V}$

\STATE Train a $K$-head GAT and freeze it after pretraining.
\STATE Store first-layer GAT-head representations $\{\mathbf{h}^{(k)}_i\}_{k=1}^{K}$ and final GAT logits $\mathbf{z}^{\mathrm{GAT}}_i$.

\FOR{$k=1$ to $K$}
    \STATE Build a single-layer $\mathrm{GS}_k$ by parameter transfer in \eqref{eq:parameter_transfer}, assign its aggregator, freeze it, and extract $\mathbf{e}^{(k)}_i$.
\ENDFOR

\REPEAT
    \STATE Compute $\mathbf{f}^{\mathrm{GAT}}_i$ and $\mathbf{f}^{\mathrm{GS}}_i$ by \eqref{eq:pregs_gat_weight}--\eqref{eq:pregs_gs_fusion}.
    \IF{$m=\mathrm{PreGS}$}
        \STATE Construct $\mathbf{f}_i$ by concatenation in \eqref{eq:pregs_concat}.
        \STATE Obtain $\mathbf{z}^{\mathrm{MLP}}_i$ and $\mathbf{z}_i$ by \eqref{eq:pregs_mlp_output} and \eqref{eq:pregs_logit_fusion}.
    \ELSE
        \STATE Construct $\mathbf{f}^{\mathrm{gate}}_i$ by source weighting and gating in \eqref{eq:source_weight}--\eqref{eq:gated_feature}.
        \STATE Obtain $\mathbf{z}^{\mathrm{MLP}}_i$ and $\mathbf{z}_i$ by \eqref{eq:pregsv2_mlp_output} and \eqref{eq:pregsv2_logit_fusion}.
    \ENDIF
    \STATE Update trainable parameters by minimizing \eqref{eq:cross_entropy}.
\UNTIL{early stopping criterion is satisfied}

\STATE Predict labels from the final logits.
\RETURN $\{\hat{y}_i\}_{v_i\in V}$
\end{algorithmic}
\end{algorithm}
\section{EXPERIMENTS}

\subsection{Experimental Setup}

\subsubsection{Datasets}

Experiments are conducted on eight public graph datasets: ACM, AMAC, AMAP, DBLP, EAT, FILM, PubMed, and Texas. These datasets cover different graph domains, including academic networks, citation networks, Amazon co-purchasing networks, air-traffic networks, actor co-occurrence networks, and webpage networks \cite{wang2019han,yang2016revisiting,ribeiro2017struc2vec,pei2020geomgcn}. They vary in graph scale, feature dimensionality, and number of classes, providing a diverse benchmark for node classification. The dataset statistics are summarized in Table~\ref{tab:dataset_statistics}.

\begin{table}[!t]
\caption{Statistics of the Datasets}
\label{tab:dataset_statistics}
\centering
\setlength{\tabcolsep}{7pt}
\renewcommand{\arraystretch}{1.12}
\footnotesize
\begin{tabular}{lccc}
\toprule
Dataset & Nodes ($N$) & Feature Dim. ($d$) & Classes ($C$) \\
\midrule
ACM    & 3025  & 1870 & 3 \\
AMAC   & 2405  & 128  & 4 \\
AMAP   & 1043  & 128  & 6 \\
DBLP   & 4057  & 334  & 4 \\
EAT    & 1575  & 64   & 5 \\
FILM   & 778   & 932  & 5 \\
PubMed & 19717 & 500  & 3 \\
Texas  & 183   & 1703 & 5 \\
\bottomrule
\end{tabular}
\end{table}

\subsubsection{Experimental Configuration}

To ensure fair comparison, all models follow the same data splitting and training protocol, and the GNN baselines are implemented with a two-layer architecture where applicable. For PreGS and PreGSv2, a two-layer GAT with eight attention heads is first pretrained as the source model, and the output dimension of each head is set to 8. The first-layer attention-head parameters are then transferred to eight single-layer GraphSAGE experts. For the default eight-head setting, PreGS and PreGSv2 use the expert aggregation configuration 
$(\mathrm{mean}, \mathrm{mean}, \mathrm{max}, \mathrm{max}, \mathrm{max}, \mathrm{sum}, \mathrm{sum}, \mathrm{sum})$.
This mixed configuration is adopted to introduce complementary neighborhood summarization behaviors among the transferred experts. Specifically, mean and sum aggregators follow the weighted-sum aggregation form discussed in Section~\ref{sec:theoretical_basis}, while the max aggregator is included as an additional summarization operator to enrich expert diversity. 

All models are trained with the Adam optimizer. The learning rate is set to 0.005, the weight decay coefficient is set to $5\times 10^{-4}$, and the dropout rate is set to 0.6. The maximum number of training epochs is 2000, and early stopping with a patience of 100 is used. Unless otherwise specified, each quantitative experiment is repeated 30 times, and the average result is reported.

For data splitting, the training ratios are set to 20\%, 40\%, and 60\%, respectively. The validation ratio is fixed at 10\%, and the remaining nodes are used for testing. All compared models are optimized using the same cross-entropy loss defined in \eqref{eq:cross_entropy}.

\subsubsection{Baselines}

We compare the proposed PreGS and PreGSv2 with representative node classification baselines, including GCN~\cite{kipf2017gcn}, SGC~\cite{wu2019sgc}, GIN~\cite{xu2019gin}, GAT~\cite{velickovic2018gat}, GATv2~\cite{brody2022gatv2}, GraphSAGE~\cite{hamilton2017graphsage}, GraphSAGE++~\cite{e2024graphsagepp}, GNNMoE~\cite{chen2025decoupledmoe}, JK-Net~\cite{xu2018jknet}, and MLP.
These baselines cover normalized propagation, simplified propagation, expressive aggregation, attention-based message passing, fixed neighborhood aggregation, multi-scale aggregation, mixture-of-experts-based adaptive message passing, cross-layer fusion, and feature-only classification.
For consistency, GraphSAGE and GraphSAGE++ are trained without neighbor sampling, using the same input graph for message passing as the other GNN baselines.
All compared models follow the same data splitting, validation, and early-stopping protocol.

\subsection{Accuracy Results}

Table~\ref{tab:node_classification_acc} reports the node classification accuracy under different training ratios. Overall, the proposed models show strong and consistent performance across the eight datasets. Among the 24 dataset--training-ratio settings, either PreGS or PreGSv2 achieves the best result in 20 settings, and at least one of the two models ranks among the top two in 21 settings. Specifically, PreGS obtains 11 best and 7 second-best results, while PreGSv2 obtains 9 best and 6 second-best results. These results indicate that parameter transfer and multi-source feature fusion can provide substantial improvements over a single attention-based branch.

More specifically, PreGS achieves the best performance under all three training ratios on ACM, DBLP, and Texas, and also ranks first on AMAP under the 20\% and 40\% training ratios. PreGSv2 achieves the best results under all three training ratios on AMAC and FILM, ranks first on AMAP under the 60\% training ratio, and obtains the best performance on PubMed under the 20\% and 40\% training ratios. GNNMoE is a strong competing baseline, particularly on ACM, AMAP, and PubMed, and achieves the best result on PubMed under the 60\% training ratio. The proposed models do not dominate EAT, where JK-Net and GIN consistently perform better. Overall, PreGS provides stronger performance on ACM, DBLP, and Texas, whereas PreGSv2 shows clearer advantages on AMAC, FILM, and PubMed, demonstrating that the two fusion strategies are complementary across different graph structures.

\begin{table*}[!t]
\caption{Node Classification Accuracy Under Different Training Ratios (\%)}
\label{tab:node_classification_acc}
\centering
\setlength{\tabcolsep}{2.4pt}
\renewcommand{\arraystretch}{1.1}
\scriptsize
\resizebox{\textwidth}{!}{
\begin{tabular}{ll*{12}{c}}
\toprule
Dataset & Train & GCN & SGC & GIN & GAT & GATv2 & GraphSAGE & GraphSAGE++ & GNNMoE & JK-Net & MLP & PreGS & PreGSv2 \\
\midrule

\multirow{3}{*}{ACM}
& 20\% & 91.23 & 86.29 & 89.16 & 90.92 & 91.18 & 90.80 & 90.65 & \second{92.42} & 91.37 & 88.26 & \first{92.63} & \third{92.35} \\
& 40\% & 91.94 & 87.22 & 90.28 & 91.71 & 91.75 & 91.61 & 91.18 & \second{93.41} & 92.12 & 89.85 & \first{93.55} & \third{93.36} \\
& 60\% & 92.42 & 88.52 & 91.49 & 92.33 & 92.38 & 92.15 & 91.86 & \third{93.83} & 92.50 & 90.37 & \first{93.96} & \second{93.85} \\
\midrule

\multirow{3}{*}{AMAC}
& 20\% & 78.59 & 76.11 & 74.73 & 76.40 & 82.10 & 76.98 & 79.48 & \third{83.79} & 81.75 & 81.92 & \second{86.00} & \first{86.29} \\
& 40\% & 79.11 & 77.03 & 76.09 & 76.91 & 83.23 & 77.29 & 81.58 & \third{85.90} & 82.27 & 83.52 & \second{87.18} & \first{87.31} \\
& 60\% & 78.96 & 77.51 & 77.14 & 77.11 & 83.40 & 77.26 & 82.45 & \third{86.72} & 82.45 & 84.19 & \second{87.44} & \first{87.75} \\
\midrule

\multirow{3}{*}{AMAP}
& 20\% & 93.37 & 92.82 & 67.80 & 93.28 & 93.76 & 92.94 & 93.17 & \third{94.03} & 93.61 & 89.46 & \first{94.87} & \second{94.85} \\
& 40\% & 93.67 & 93.16 & 72.84 & 93.54 & 94.03 & 93.12 & 93.97 & \third{94.81} & 93.96 & 91.01 & \first{95.37} & \second{95.36} \\
& 60\% & 93.66 & 93.24 & 66.92 & 93.83 & 94.26 & 93.28 & 94.31 & \third{95.21} & 94.09 & 91.60 & \second{95.52} & \first{95.55} \\
\midrule

\multirow{3}{*}{DBLP}
& 20\% & 81.93 & 71.38 & 79.72 & 81.61 & 81.65 & 81.38 & 79.89 & 81.40 & \third{82.40} & 79.46 & \first{83.08} & \second{82.66} \\
& 40\% & 83.12 & 73.17 & 81.87 & 82.79 & 82.68 & 82.77 & 81.54 & 83.07 & \third{83.46} & 81.23 & \first{84.04} & \second{83.80} \\
& 60\% & 84.06 & 74.09 & 83.03 & 83.32 & 83.34 & 83.63 & 82.64 & 83.89 & \second{84.24} & 81.63 & \first{84.50} & \third{84.10} \\
\midrule

\multirow{3}{*}{EAT}
& 20\% & 47.09 & 41.15 & \second{51.85} & 34.38 & 34.45 & 36.30 & \third{47.71} & 46.30 & \first{54.53} & 39.31 & 47.00 & 47.52 \\
& 40\% & \third{50.40} & 41.26 & \second{54.63} & 39.24 & 38.29 & 37.91 & 48.18 & 49.05 & \first{55.14} & 43.57 & 49.15 & 49.29 \\
& 60\% & \third{51.07} & 40.99 & \second{54.52} & 38.87 & 41.52 & 38.04 & 44.77 & 50.14 & \first{58.35} & 45.73 & 49.23 & 48.84 \\
\midrule

\multirow{3}{*}{FILM}
& 20\% & 28.03 & 25.97 & 26.89 & 27.89 & 27.98 & 28.80 & 28.48 & 35.26 & 28.16 & \third{35.90} & \second{36.12} & \first{36.56} \\
& 40\% & 28.73 & 26.36 & 28.06 & 28.35 & 28.30 & 29.41 & 29.87 & 35.93 & 29.30 & \third{36.98} & \second{36.99} & \first{37.89} \\
& 60\% & 28.97 & 26.65 & 28.64 & 28.61 & 28.60 & 29.63 & 30.69 & 36.99 & 29.70 & \third{37.66} & \second{37.74} & \first{38.30} \\
\midrule

\multirow{3}{*}{PubMed}
& 20\% & 68.55 & 57.66 & 79.13 & 67.35 & 70.72 & 68.98 & 77.15 & 84.94 & 73.84 & \second{85.99} & \third{85.53} & \first{86.67} \\
& 40\% & 68.66 & 58.01 & 82.13 & 67.34 & 71.14 & 69.11 & 77.92 & \third{86.61} & 74.32 & \second{86.70} & 86.48 & \first{87.12} \\
& 60\% & 68.82 & 58.16 & 83.55 & 67.43 & 71.42 & 69.25 & 78.32 & \first{87.68} & 74.58 & \third{87.21} & 86.91 & \second{87.37} \\
\midrule

\multirow{3}{*}{Texas}
& 20\% & 54.83 & 52.87 & 52.51 & 54.34 & 54.44 & 53.51 & \third{55.37} & 54.39 & 54.83 & \second{59.15} & \first{62.35} & 54.68 \\
& 40\% & 54.71 & 53.01 & 52.21 & 53.80 & 54.09 & 54.28 & \third{59.24} & 55.07 & 56.78 & \second{62.72} & \first{66.96} & 54.09 \\
& 60\% & 56.19 & 54.35 & 53.10 & 54.40 & 54.40 & 52.50 & \third{59.88} & 54.35 & 58.45 & \second{69.94} & \first{75.06} & 54.40 \\
\bottomrule
\end{tabular}
}
\vspace{2pt}

{\footnotesize \first{Best}, \second{second-best}, and \third{third-best} mark the top three results in each row.}
\end{table*}

\subsection{Effectiveness of Parameter Transfer}

To examine whether pretrained GAT parameters can provide useful initialization for GraphSAGE experts, we transfer the first-layer attention-head parameters of a two-layer eight-head GAT to eight single-layer GraphSAGE experts. An MLP classifier is then attached after each transferred expert. The experiment is conducted under the 20\% training ratio, and the results are reported in Table~\ref{tab:weight_copy}.

As shown in Table~\ref{tab:weight_copy}, a single transferred GraphSAGE expert generally performs worse than the complete GAT and GraphSAGE models, but most experts still retain non-trivial classification ability. This indicates that the linear transformation parameters learned by GAT attention heads can be reused by GraphSAGE experts as meaningful feature mappings. The results also show that different aggregators adapt to the transferred parameters differently. For example, on AMAC, mean-based experts are more stable than max-based experts. This supports the use of multiple aggregation operators to provide diverse expert representations.

\begin{table*}[!t]
\caption{Effectiveness of Head-Wise Parameter Transfer Under the 20\% Training Ratio}
\label{tab:weight_copy}
\centering
\setlength{\tabcolsep}{5.2pt}
\renewcommand{\arraystretch}{1.12}
\scriptsize
\resizebox{0.88\textwidth}{!}{
\begin{tabular}{lcccccccccc}
\toprule
Dataset & GAT & GraphSAGE & Mean-1 & Mean-2 & Max-1 & Max-2 & Max-3 & Sum-1 & Sum-2 & Sum-3 \\
\midrule
ACM  & 90.91 & 90.80 & 88.76 & 88.24 & 88.08 & 87.42 & 82.91 & 87.90 & 88.50 & 86.56 \\
AMAC & 76.48 & 76.91 & 70.19 & 66.35 & 47.11 & 43.99 & 40.42 & 54.28 & 67.64 & 64.45 \\
\bottomrule
\end{tabular}
}
\vspace{2pt}

{\footnotesize All values denote node classification accuracy (\%).}
\end{table*}

\subsection{Ablation Study}

To analyze the contribution of each component, ablation studies are conducted for PreGS and PreGSv2 under the 20\% training ratio. The results are shown in Table~\ref{tab:ablation}. The Full Model rows report the accuracy of the complete models, while the remaining rows report the accuracy change relative to the corresponding full model.

For PreGS, using only raw features leads to clear performance drops on most datasets, confirming the importance of graph structural information. Removing the final GAT logits also causes declines on several datasets, indicating that the high-level structural information from the pretrained GAT is useful for final prediction. By contrast, removing the GAT1-fused features or GS-fused features has a more dataset-dependent effect. In particular, removing GS-fused features causes a large drop on EAT, showing that the transferred GraphSAGE experts provide useful complementary neighborhood information on this dataset.

For PreGSv2, removing weighted concatenation or gating causes only small changes on most datasets, but clear drops are observed on PubMed. This suggests that source-level weighting and gating are helpful when different feature sources contribute unevenly. However, the positive changes on Texas also indicate that these adaptive modules are not universally beneficial for every graph. Overall, the ablation results show that the main components have meaningful but dataset-dependent contributions.

\begin{table*}[!t]
\caption{Ablation Study of PreGS and PreGSv2 Under the 20\% Training Ratio}
\label{tab:ablation}
\centering
\setlength{\tabcolsep}{4.2pt}
\renewcommand{\arraystretch}{1.08}
\scriptsize
\resizebox{0.98\textwidth}{!}{
\begin{tabular}{llcccccccc}
\toprule
Model & Variant & ACM & AMAC & AMAP & DBLP & EAT & FILM & PubMed & Texas \\
\midrule
\multirow{6}{*}{PreGS}
& Full Model & \textbf{92.64} & \textbf{85.91} & \textbf{94.90} & \textbf{83.08} & \textbf{46.94} & \textbf{36.06} & \textbf{85.47} & \textbf{62.38} \\
& Raw Features Only & -4.32 & -2.23 & -4.15 & -2.98 & -5.78 & -0.36 & 0.53 & 1.65 \\
& w/o GAT Final & -1.38 & -0.38 & -1.28 & -1.51 & 0.43 & -0.37 & 0.06 & 1.55 \\
& w/o GAT1-Fused & -0.27 & 0.30 & -0.05 & -0.12 & 0.31 & 0.03 & 0.34 & 0.44 \\
& w/o GS-Fused & -0.16 & -0.09 & 0.15 & -0.10 & -6.86 & 0.17 & 0.21 & -0.18 \\
& w/o Raw Features & -1.81 & -4.34 & -1.43 & -1.97 & 2.33 & -7.64 & -9.36 & -7.55 \\
\midrule
\multirow{3}{*}{PreGSv2}
& Full Model & \textbf{92.35} & \textbf{86.21} & \textbf{94.92} & \textbf{82.66} & \textbf{47.46} & \textbf{36.57} & \textbf{86.65} & \textbf{54.68} \\
& w/o Weighted Concat & 0.01 & 0.08 & 0.00 & 0.21 & -0.50 & -0.17 & -1.24 & 1.44 \\
& w/o Gating & 0.11 & -0.04 & -0.02 & 0.20 & 0.16 & -0.01 & -0.57 & 1.21 \\
\bottomrule
\end{tabular}
}
\vspace{2pt}

{\footnotesize Full Model rows report accuracy values. Other rows report accuracy changes relative to the corresponding full model.}
\end{table*}

\subsection{Sensitivity to the Number of GAT Heads}

To examine the sensitivity of the proposed framework to the number of GAT heads, we vary the number of attention heads under the 20\% training ratio and compare GAT, PreGS, and PreGSv2. In this experiment, all transferred GraphSAGE experts use the mean aggregator to isolate the effect of the number of attention heads. The results are shown in Table~\ref{tab:gat_heads}.

Across all selected datasets and head settings, PreGS and PreGSv2 consistently outperform the original GAT. PreGS achieves the best performance on ACM and DBLP, while PreGSv2 performs best on AMAC and FILM. This indicates that the proposed framework can obtain stable gains under different head settings, and that the two variants show different advantages across datasets.

Although the performance of GAT may improve when more heads are used on some datasets, the proposed models maintain clear improvements over GAT for 2, 4, and 8 heads. Therefore, the performance gain does not simply come from changing the number of attention heads, but mainly from parameter transfer, multi-expert aggregation, and multi-source feature fusion.

\begin{table}[!t]
\caption{Sensitivity to the Number of GAT Heads Under the 20\% Training Ratio}
\label{tab:gat_heads}
\centering
\setlength{\tabcolsep}{5.0pt}
\renewcommand{\arraystretch}{1.12}
\scriptsize
\begin{tabular}{llccc}
\toprule
Dataset & Heads & GAT & PreGS & PreGSv2 \\
\midrule

\multirow{3}{*}{ACM}
& 2 & \third{90.91} & \first{92.42} & \second{92.27} \\
& 4 & \third{90.93} & \first{92.39} & \second{92.18} \\
& 8 & \third{90.92} & \first{92.62} & \second{92.20} \\
\midrule

\multirow{3}{*}{AMAC}
& 2 & \third{69.95} & \second{85.59} & \first{85.90} \\
& 4 & \third{75.63} & \second{86.00} & \first{86.10} \\
& 8 & \third{76.20} & \second{85.63} & \first{86.02} \\
\midrule

\multirow{3}{*}{DBLP}
& 2 & \third{80.66} & \first{82.84} & \second{82.42} \\
& 4 & \third{81.04} & \first{82.87} & \second{82.36} \\
& 8 & \third{81.61} & \first{83.04} & \second{82.71} \\
\midrule

\multirow{3}{*}{FILM}
& 2 & \third{27.34} & \second{35.93} & \first{36.84} \\
& 4 & \third{27.53} & \second{35.86} & \first{36.78} \\
& 8 & \third{27.89} & \second{36.10} & \first{36.68} \\
\bottomrule
\end{tabular}
\vspace{2pt}

{\footnotesize \first{Best}, \second{second-best}, and \third{third-best} mark the top three results in each row.}
\end{table}

\subsection{Aggregator Combination Analysis}
\label{sec:aggregator_analysis}

To examine the effect of expert aggregation strategies, we compare
representative single-aggregator and mixed-aggregator configurations
under the 20\% training ratio. Table~\ref{tab:aggregator_analysis}
reports the best result and its corresponding configuration for each
dataset.

\begin{table}[!t]
\caption{Best Aggregator Configurations Under the 20\% Training Ratio}
\label{tab:aggregator_analysis}
\centering
\setlength{\tabcolsep}{3.2pt}
\renewcommand{\arraystretch}{1.10}
\footnotesize
\begin{tabular}{lcccc}
\toprule
& \multicolumn{2}{c}{PreGS}
& \multicolumn{2}{c}{PreGSv2} \\
\cmidrule(lr){2-3}
\cmidrule(lr){4-5}
Dataset & Acc. (\%) & Config. & Acc. (\%) & Config. \\
\midrule
ACM    & 92.65 & Mixed    & 92.40 & Max-Sum  \\
AMAC   & 85.96 & Mean-Sum & 86.23 & All-Sum  \\
AMAP   & 95.06 & All-Max  & 95.09 & All-Max  \\
DBLP   & 83.09 & All-Max  & 82.73 & Mean-Sum \\
EAT    & 47.43 & All-Sum  & 48.22 & Max-Sum  \\
FILM   & 36.13 & All-Mean & 36.68 & All-Mean \\
PubMed & 85.59 & All-Mean & 86.66 & Mean-Sum \\
Texas  & 63.28 & All-Max  & 54.68 & All-Mean \\
\bottomrule
\end{tabular}

\vspace{2pt}
{\scriptsize Mixed denotes
(mean, mean, max, max, max, sum, sum, sum).}
\end{table}

The optimal configuration varies across datasets and model variants,
indicating that no single aggregation strategy is universally optimal.
This result supports the use of heterogeneous experts to capture
complementary neighborhood information.

\subsection{Embedding Visualization Analysis}
\label{sec:tsne_visualization}

To qualitatively compare the learned node representations, we visualize
the embeddings of twelve models on AMAC under the 20\% training ratio
using t-distributed stochastic neighbor embedding (t-SNE). As shown in Figures~\ref{fig:tsne_amac_part1} and
\ref{fig:tsne_amac_part2}, several baselines still exhibit noticeable
class overlap. In comparison, PreGS forms more organized local clusters,
while PreGSv2 shows clearer separation in several regions, consistent
with its higher accuracy on AMAC.

\begin{figure}[!t]
\centering
\includegraphics[width=\columnwidth]{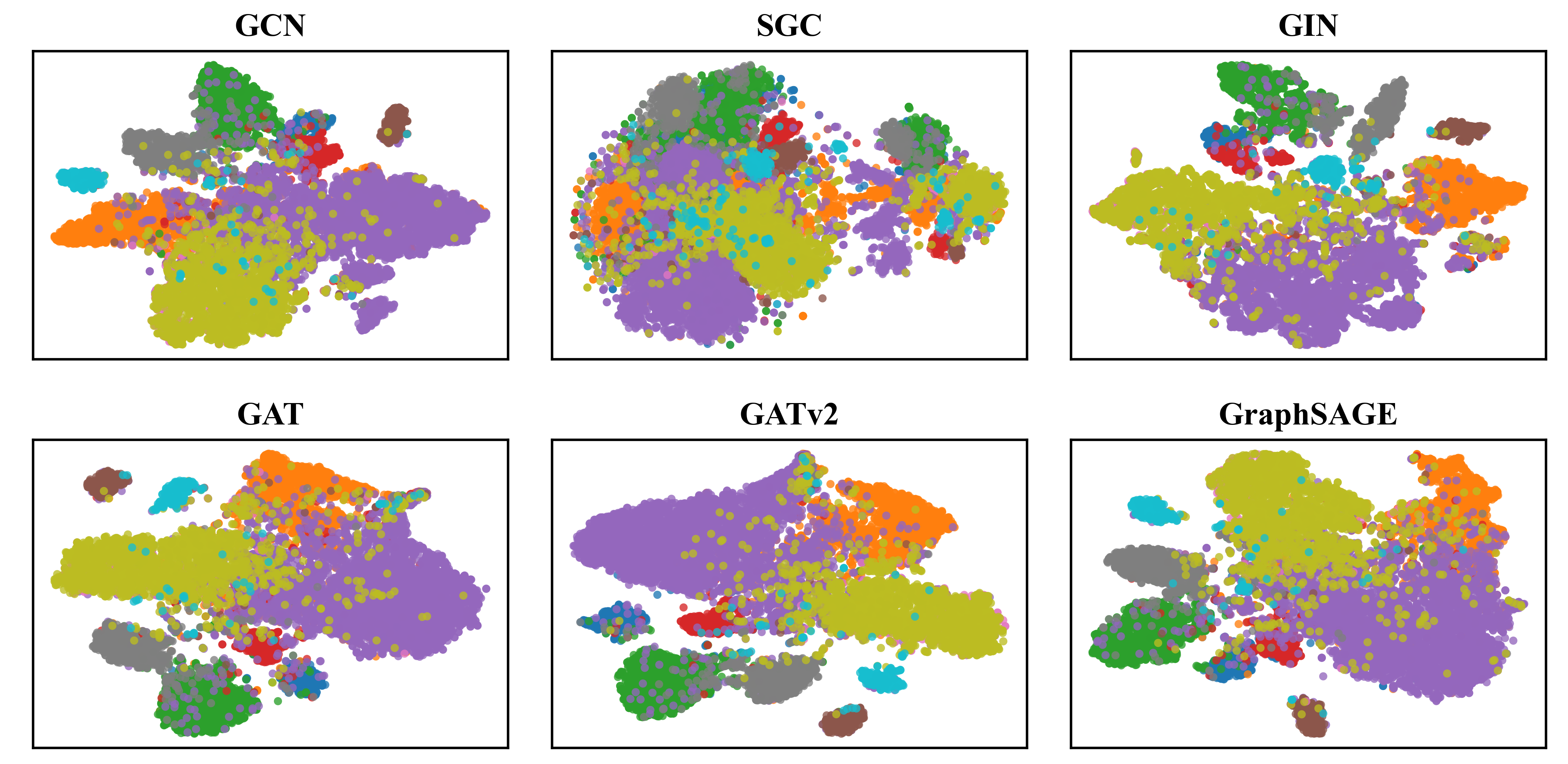}
\caption{t-SNE visualization on AMAC: GCN, SGC, GIN, GAT, GATv2, and GraphSAGE.}
\label{fig:tsne_amac_part1}
\end{figure}

\begin{figure}[!t]
\centering
\includegraphics[width=\columnwidth]{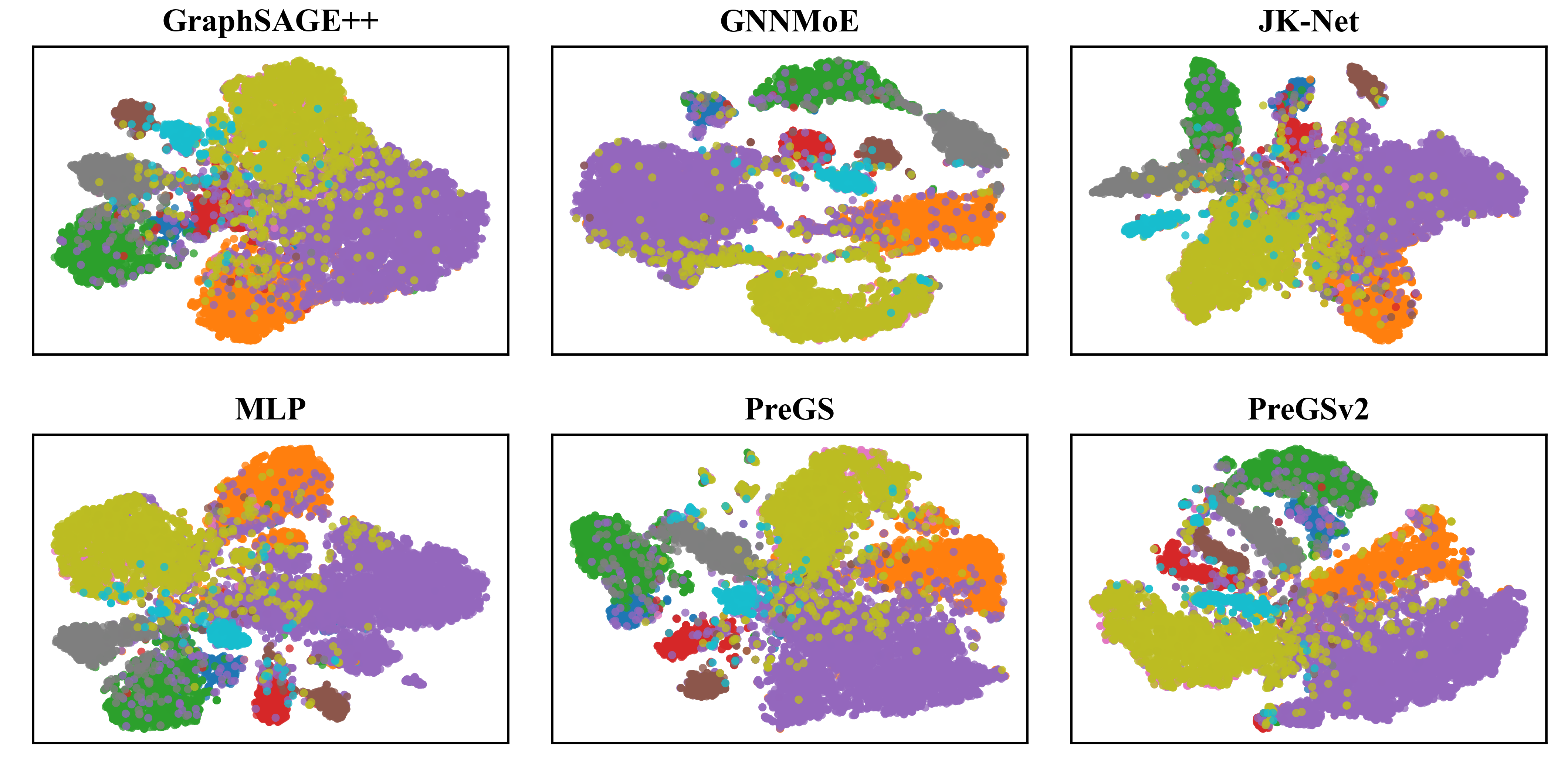}
\caption{t-SNE visualization on AMAC: GraphSAGE++, GNNMoE, JK-Net, MLP, PreGS, and PreGSv2.}
\label{fig:tsne_amac_part2}
\end{figure}

\subsection{Training Time Analysis}

To evaluate the computational overhead of the proposed models, we compare
the end-to-end training convergence time of MLP, GAT, GNNMoE, PreGS, and
PreGSv2 on ACM and AMAC under the 20\% training ratio. For PreGS and
PreGSv2, the reported time includes GAT pretraining and the subsequent
fusion-training stage. The results are reported in
Table~\ref{tab:training_time}.

\begin{table}[!t]
\caption{Training Time Comparison on ACM and AMAC}
\label{tab:training_time}
\centering
\setlength{\tabcolsep}{4.5pt}
\renewcommand{\arraystretch}{1.12}
\footnotesize
\begin{tabular}{lccccc}
\toprule
Dataset & MLP & GAT & GNNMoE & PreGS & PreGSv2 \\
\midrule
ACM  & 0.50 & 2.19 & 5.08  & 3.33  & 3.81  \\
AMAC & 1.08 & 8.49 & 13.58 & 10.76 & 11.03 \\
\bottomrule
\end{tabular}
\vspace{2pt}

{\footnotesize All values are reported in seconds.}
\end{table}

As shown in Table~\ref{tab:training_time}, MLP requires the least training
time because it does not involve graph-based message passing. PreGS and
PreGSv2 incur additional computational cost compared with GAT because of
the parameter-transfer and fusion-training stages. Nevertheless, both
proposed models are consistently more efficient than GNNMoE on the two
datasets. This result indicates that the proposed framework introduces
moderate additional overhead while maintaining a more favorable training
efficiency than the competing graph mixture-of-experts model.

\section{CONCLUSION}

This paper proposed PreGS, a parameter-transfer-based multi-expert graph neural network framework for node classification. PreGS transfers the first-layer attention-head parameters of a pretrained GAT to multiple GraphSAGE experts and freezes both the GAT and the transferred experts during subsequent training. Based on PreGS, PreGSv2 further introduces source-level weighting and structural gating to improve adaptive multi-source feature fusion.

Experiments on eight public graph datasets show that PreGS and PreGSv2 achieve competitive performance against representative GNN baselines. The parameter-transfer, ablation, sensitivity, aggregator, visualization,
and training-time analyses further support the effectiveness of the
proposed framework. Future work will extend the framework to larger-scale graphs and other graph learning tasks.

\end{document}